\newtheorem{theorem}{Theorem}
\newtheorem*{definition}{Definition}
\newtheorem{lemma}{Lemma}[]
\newcommand{\ds}{\displaystyle}
\newcommand{\E}{\text{E}}
\newcommand{\ts}{\textsuperscript}
\DeclareMathOperator*{\argmax}{\arg\!\max}
\title{Randomized Allocation with Nonparametric Estimation for Contextual Multi-Armed Bandits with Delayed Rewards} 
\author{Sakshi Arya and Yuhong Yang\\
\small{School of Statistics, University of Minnesota} }
\date{}  
\begin{document}
\maketitle







\begin{abstract}
We study a multi-armed bandit problem with covariates in a setting where there is a possible delay in observing the rewards.  Under some reasonable assumptions on the probability distributions for the delays and using an appropriate randomization to select the arms, the proposed strategy is shown to be strongly consistent.
\end{abstract}




\section{Introduction}

 Multi-armed bandits were first introduced in the landmark paper by \cite{robbins1952}. The development of multi-armed bandit methodology has been partly motivated by clinical trials with the aim of balancing two competing goals, 1) to effectively identify the best treatment (exploration) and 2) to treat patients as effectively as possible during the trial (exploitation).

The classic formulation of the multi-armed bandit problem in the context of clinical practice is as follows: there are $\ell \geq 2$ treatments (arms) to treat a disease. The doctor (decision maker) has to choose for each patient, one of the $\ell$ available treatments, which result in a reward (response) of improvement in the condition of the patient. The goal is to maximize the cumulated rewards as much as possible. 
In the classic multi-armed bandit terminology, this is achieved by devising a policy for sequentially pulling arms out of the $\ell$ available arms, with the goal of maximizing the total cumulative reward, or minimizing the regret. Substantial amount of work has been done both on standard context-free bandit problems (\cite{gittins1979bandit}, \cite{berry1985bandit}, \cite{lai1985asymptotically},  \cite{auer2002finite}) and on 
contextual bandits or multi-armed bandits with covariates (MABC) (\cite{woodroofe1979one}, \cite{sarkar1991one}, \cite{yang2002randomized}, \cite{langford2008epoch}, \cite{li2010contextual}  and \cite{slivkins2014contextual}). The MABC problems have been studied in both parametric and nonparametric frameworks. Our work follows nonparametric framework of MABC in \cite{yang2002randomized} where the randomized strategy is an annealed $\epsilon$-greedy strategy, which is a popular heuristic in bandits literature (\cite{sutton2018reinforcement}, Chapter 2). Some of the other notable work in studying finite time analysis for MABC problems in a nonparametric framework are \cite{perchet2013multi,qian2016kernel,qian2016randomized}. Some insightful overviews and bibliographic remarks can be found in \cite{bubeck2012regret}, \cite{cesa2006prediction}, \cite{lattimore2018bandit}.

 In most multi-armed bandit settings it is assumed that the rewards related to each treatment allocation are achieved before the next patient arrives. This is not realistic since in most cases the treatment effect is seen at some delayed time after the treatment is provided. Most often, it would be the case that while waiting for treatment results of one patient, other patients would have to be treated. In such a situation, all past patient information and feedback is not yet available to make the best treatment choices for the patients being treated at present.

  While an overwhelming amount of work has been done assuming instantaneous observations in both contextual and non-contextual multi-armed bandit problems, not much work has been done for the case with delayed rewards.  
The importance of considering delays was highlighted by \cite{anderson1964sequential} and \cite{suzuki1966sequential}. They used Bayesian multi-armed bandits to devise optimal policies. Thompson sampling (\cite{agrawal2012analysis,russo2018tutorial}) is another commonly used Bayesian heuristic. \cite{chapelle2011empirical} conducted an empirical study to illustrate robustness of Thompson sampling in the case of constant delayed feedback. Most of the work that has been done in the recent years is motivated by reward delays in online settings like advertisement and news article recommendations. \cite{Dudik:2011:EOL:3020548.3020569} considered a constant known delay which resulted in an additional additive penalty in the regret for the setting with covariates.  \cite{joulani2013online} propose some black box multi-armed bandit algorithms that use the algorithms for the non-delayed case to handle the delayed case. Their finite time results show an additive increase in the regret for stochastic multi-armed bandit problems.  
More recently, \cite{pike2018bandits} proposed a variant of delayed bandits with aggregated anonymous feedback. They show that with their proposed algorithm and with the knowledge of the expected delay, an additive regret increase like in \cite{joulani2013online} can still be maintained. Some other work related to delayed bandits can be found in \cite{mandel2015queue}, \cite{cesa2016delay} and \cite{vernade2017stochastic}.

In our knowledge, there does not seem to be any work on delayed MABCs using a nonparametric framework. In this work, we propose an algorithm accounting for delayed rewards with optimal treatment decision making as the motivation. We use nonparametric estimation to estimate the functional relationship between the rewards and the covariates. We show that the proposed algorithm is strongly consistent in that the cumulated rewards almost surely converge to the optimal cumulated rewards.

\section{Problem setup}
Assume that there are $\ell \geq 2$ arms available for allocation. Each arm allocation results in a reward which is obtained at some random time after the arm allocation. For each time $j \geq 1$, a treatment $I_j$ is alloted based on the data observed previously and the covariate $X_j$. We assume that the covariates are $d$-dimensional continuous random variables and take values in the hypercube $[0,1]^d$.  Since the rewards can be obtained at some delayed time, we denote $\{t_j \in \mathbb{R}^+, j\geq 1\}$ to be the observation time for the rewards for arms $\{I_j, j\geq 1\}$ respectively.  Let $Y_{i,j}$ be the reward obtained at time $t_j \geq j$ for arm $i = I_j$. The mean reward with covariate $X_j$ for the $i$\ts{th} arm is denoted as $f_i(X_j), 1\leq i \leq \ell$. The observed reward with covariate $X_j$ by pulling the $i$th arm is modeled as, $
Y_{i,j} = f_{i}(X_j) + \epsilon_{i,j}$,
where $\epsilon_{i,j}$ denotes random error with $\text{E}(\epsilon_{i,j}) = 0$ and $\text{Var}(\epsilon_{i,j}) < \infty$ for all $1\leq i \leq \ell$ and $j\in \mathbb{N}$.
The functions $f_i$ are assumed to be unknown and not of any given parametric form.

The rewards are observed at delayed times $t_j$; the delay in the reward for arm $I_j$ pulled at the $j$\ts{th} time is given by a random variable $d_j:= t_j - j$. Assume that these delays are mutually independent, independent of the covariates, and could be drawn from different distributions. That is, let $\{d_j, j\geq 1\}$ be a sequence of independent random variables with probability density functions $\{g_j, j \geq 1\}$ and the cumulative distribution functions $\{G_j, j \geq 1\}$, respectively. 

 Let $\{X_j,j\geq 1\}$ be a sequence of covariates independently generated according to an unknown underlying probability disribution $P_X$, from a population supported in $[0,1]^d$. 
 Let $\delta$ be a sequential allocation rule, which for each time $j$ chooses an arm $I_j$ based on the previous observations and $X_j$. The total mean reward up to time $n$ is $\sum_{j=1}^n f_{I_j}(X_j)$. 
To evaluate the performance of the allocation strategy, let $i^*(x) = \argmax_{1\leq i \leq \ell} f_i(x)$ and $f^*(x) = f_{i^*(x)}(x)$.
 Without the knowledge of the random errors, the ideal performance occurs when the choices of arms selected $I_1,\hdots, I_n$ match the optimal arms $i^*(X_1), \hdots, i^*(X_n)$, yielding the optimal total reward $\sum_{j=1}^n f^*(X_j)$.
The ratio of these two quantities is the quantity of interest,
\begin{equation}
R_n(\delta) = \dfrac{\sum_{j=1}^n f_{I_j}(X_j)}{\sum_{j=1}^n f^*(X_j)}.
\end{equation}
It can be seen that $R_n$ is a random variable no bigger than 1. 

\begin{definition} 
An allocation rule $\delta$ is said to be strongly consistent if $R_n(\delta) \rightarrow 1$ with probability 1, as $n \rightarrow \infty$.
\end{definition}

In Section \ref{algorithm}, we propose an allocation rule which takes into account reward delays. Then in Sections \ref{consistency} and \ref{consistency_histogram}, we discuss the consistency of the proposed allocation rule under some assumptions and then validate those assumptions when the histogram method is used to estimate the regression functions respectively.

\section{The proposed strategy}\label{algorithm}
Let $Z^{n,i}$ denote the set of observations for arm $i$ whose rewards have been obtained up to time $n$, that is, $Z^{n,i}:= \{(X_j,Y_{i,j}): 1\leq t_j \leq n \ \text{and} \ I_j = i\}$. Let $\hat{f}_{i,n}$ denote the regression estimator of $f_i$ based on the data $Z^{n,i}$. Let $\{\pi_j, j \geq 1 \}$ be a sequence of positive numbers in  $[0,1]$ decreasing to zero. 
\begin{enumerate}[Step 1.]
  \item \textbf{Initialize.} Allocate each arm once, w.l.o.g., we can have $I_1 = 1, I_2 = 2, \hdots, I_\ell = \ell$. Since the rewards are not immediately obtained for each of these $\ell$ arms, we continue these forced allocations until we have at least one reward observed for each arm. Suppose, that happens at time $m_0$.
  \item \textbf{Estimate the individual functions $f_i$.} For $n = m_0$, based on $Z^{n,i}$, estimate $f_i$ by $\hat{f}_{i,n}$ for $1 \leq i \leq \ell$ using the chosen regression procedure.
  \item \textbf{Estimate the best arm.} For $X_{n+1}$, let 
  $\hat{i}_{n+1}(X_{n+1}) = \arg\max_{1\leq i \leq \ell} \hat{f}_{i,n}(X_{n+1})$.
   \item \label{randomization_step}  \textbf{Select and pull.} Randomly select an arm with probability $1-(\ell-1)\pi_{n+1}$ for $i = \hat{i}_{n+1}$ and with probability $\pi_{n+1}$, for all other arms, $i \neq \hat{i}_{n+1}$. Let $I_{n+1}$ denote this selected arm.
  \item \textbf{Update the estimates.}
  \begin{enumerate}[Step 5a.]
    \item If a reward is obtained at the $(n+1)$\ts{th} time (could be one or more rewards corresponding to one or more arms $I_j, 1\leq j \leq (n+1)$), update the function estimates of $f_i$ for the respective arm (or arms) for which the reward (or rewards) are obtained at $(n+1)\ts{th}$ time.
    \item If no reward is obtained at the $(n+1)$\ts{th} time, use the previous function estimators, i.e. $\hat{f}_{i,n+1} = \hat{f}_{i,n} \ \forall \ i \in \{1,\hdots,\ell\}$. 
  \end{enumerate}
  \item \textbf{Repeat.} Repeat steps 3-5 when the next covariate $X_{n+2}$ surfaces and so on.
\end{enumerate}
The choice of $\pi_n$ in the randomization step \ref{randomization_step} is crucial in determining how much exploration and exploitation is done at any phase of the trial. To emphasize the role of $\pi_n$, we may use $\delta_{\pi}$ to denote the allocation rule. In order to select the best arm as time progresses, $\pi_n$ needs to decrease to zero but the rate of decrease will play a key role in determining how well the allocations work. For example, if in our set-up we have large delays for some arms then it might be beneficial to decrease $\pi_n$ at a slower rate so that there is enough exploration and the accuracy of our estimates is not affected in the long run. We use a user-determined choice of $\pi_n$ in this work, that is, the sequence $\pi_n$ does not adapt to the data.
  
\subsection{Consistency of the proposed strategy}\label{consistency}

Let $A_n := \{j: t_j \leq n\}$, denote the time points for which rewards were obtained by time $n$. 
 If $A_n$ is known, then the total number of observed rewards until time $n$, denoted by $N_{n}$, is also known. Recall that it is possible to observe multiple rewards at the same time point. Given $A_n$, let $\{s_k, k=1,\hdots, N_n\}$ be the reordered sequence of these observed reward timings, $\{t_k, k\in A_n\}$, arranged in a non-decreasing order. 

\textbf{Assumption 1. } 
   The regression procedure is strongly consistent in $L_\infty$ norm for all individual mean functions $f_i$ under the proposed allocation scheme. That is, $||\hat{f}_{i,n} - f_i||_\infty \overset{\text{a.s.}}{\rightarrow} 0$ as $n \rightarrow \infty$ for each $1\leq i \leq \ell$. As described in the allocation strategy in Section \ref{algorithm}, $\hat{f}_{i,n}$ is the estimator based on all previously observed rewards. That is, after initialization, the mean reward function estimators are only updated at the time points $\{s_k, k=1,\hdots N_n\}$ where $N_n$ is the number of rewards observed by time $n$. Therefore, this condition is equivalent to saying $||\hat{f}_{i,s_n} - f_i||_\infty \overset{\text{a.s.}}{\rightarrow} 0$ as $n\rightarrow \infty$.

\textbf{Assumption 2.} 
  Mean functions satisfy $f_i(x) \geq 0$, $A = \ds\sup_{1\leq i\leq \ell} \sup_{x \in [0,1]^d} (f^*(x) - f_i(x)) < \infty \ \text{and}\ \E (f^*(X_1)) > 0.$
\vspace{0.2cm}
\begin{theorem}\label{Theorem 1}
Under Assumptions 1 and 2, the allocation rule $\delta_\pi$ is strongly consistent as $n\rightarrow \infty$.
\end{theorem}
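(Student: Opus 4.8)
The plan is to show that the normalized regret $1-R_n(\delta_\pi)$ vanishes almost surely by controlling the numerator (total regret) and the denominator (optimal total reward) separately. Since the $X_j$ are i.i.d.\ from $P_X$ and $\E(f^*(X_1))>0$, the strong law of large numbers gives $\frac1n\sum_{j=1}^n f^*(X_j)\to \E(f^*(X_1))\in(0,\infty]$ a.s., so the denominator is eventually bounded below by a positive constant. It therefore suffices to prove that the averaged regret $\frac1n\sum_{j=1}^n\bigl(f^*(X_j)-f_{I_j}(X_j)\bigr)\to 0$ a.s., because then $1-R_n$ is a ratio of a quantity tending to $0$ over one bounded away from $0$.

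To analyze the regret I would condition on the information available just before the randomization at each step. Let $\mathcal F_j$ be the $\sigma$-field generated by $X_1,\dots,X_j$, the delays and rewards observed by time $j-1$, and the past selections $I_1,\dots,I_{j-1}$; then the greedy arm $\hat i_j(X_j)=\argmax_{1\le i\le\ell}\hat f_{i,j-1}(X_j)$ and $f^*(X_j)$ are $\mathcal F_j$-measurable, while $I_j$ is produced by an independent randomization for which $I_j\neq\hat i_j$ has conditional probability $(\ell-1)\pi_j$. Writing $\xi_j:=\mathbf 1\{I_j\neq\hat i_j\}$ and using $0\le f^*-f_i\le A$ (Assumption 2), I obtain the pointwise bound
$$f^*(X_j)-f_{I_j}(X_j)\ \le\ \bigl(f^*(X_j)-f_{\hat i_j}(X_j)\bigr)+A\,\xi_j,$$
which separates estimation error from exploration cost.

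The crucial step is a uniform comparison that converts $L_\infty$ consistency into near-optimality of the greedy arm. Setting $\eta_n:=\max_{1\le i\le\ell}\|\hat f_{i,n}-f_i\|_\infty$, for every $x$ one has $f_{\hat i_j}(x)\ge \hat f_{\hat i_j,j-1}(x)-\eta_{j-1}\ge \hat f_{i^*(x),j-1}(x)-\eta_{j-1}\ge f^*(x)-2\eta_{j-1}$, the middle inequality being the definition of $\hat i_j$. Hence $f^*(X_j)-f_{\hat i_j}(X_j)\le 2\eta_{j-1}$, and Assumption 1 gives $\eta_{j-1}\to0$ a.s., so by Ces\`aro averaging the greedy part $\frac1n\sum_{j=1}^n\bigl(f^*(X_j)-f_{\hat i_j}(X_j)\bigr)\to0$ a.s. For the exploration part I split $\xi_j=(\ell-1)\pi_j+\bigl(\xi_j-(\ell-1)\pi_j\bigr)$: the deterministic piece averages to $0$ because $\pi_j\downarrow0$ (Ces\`aro), while $M_n:=\sum_{j\le n}\bigl(\xi_j-(\ell-1)\pi_j\bigr)$ is a martingale with uniformly bounded increments, so $\frac1n M_n\to0$ a.s.\ by the martingale strong law (Kolmogorov's criterion, since $\sum_j \Var(\xi_j\mid\mathcal F_j)/j^2\le\sum_j 1/(4j^2)<\infty$). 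Combining these estimates yields $\frac1n\sum_{j=1}^n\bigl(f^*(X_j)-f_{I_j}(X_j)\bigr)\to0$ a.s., which together with the denominator bound completes the argument.

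I expect the main obstacle to be the rigorous treatment of the stochastic exploration term: because $I_j$ is random, the per-step regret can be as large as $A$ whenever exploration occurs, so one cannot argue pathwise and must instead average out the indicators $\xi_j$ via a martingale strong law, carefully verifying the predictable structure (that $\xi_j$ has conditional mean $(\ell-1)\pi_j$ given $\mathcal F_j$) and the bounded-increment condition. By contrast, the uniform near-optimality comparison is conceptually central but technically short once the $L_\infty$ bound of Assumption 1 is available, and only $\pi_j\to0$ (not any summability condition) is needed here, since the requirement of sufficient exploration is absorbed into Assumption 1.
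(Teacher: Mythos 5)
Your proposal is correct and follows essentially the same route as the paper: the same decomposition of the per-round regret into a greedy estimation term (controlled by the $2\eta$ comparison from Assumption 1) plus an exploration term $A\,\xi_j$, the same centering of the exploration indicators with a Kolmogorov-variance/Kronecker (martingale SLLN) argument, and the same SLLN lower bound on the denominator via $\E f^*(X_1)>0$. Your martingale phrasing of the $\xi_j$ step is, if anything, slightly more careful than the paper's claim that the $U_j$ are independent, but it is the same argument in substance.
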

\begin{proof}
Note that consistency holds only when the sequence $\{\pi_n, n\geq 1\}$ is chosen such that $\pi_n \rightarrow 0$ as $n\rightarrow \infty$.
The proof is very similar to the proof in \cite{yang2002randomized}. The details can be found in the supplementary material (see \textit{Appendix A.1} in Appendix).
\end{proof}

Note that Assumption 1, seemingly natural, is a strong assumption and it requires additional work to verify this assumption for a particular regression setting. We verify this assumption for the histogram method in Section \ref{consistency_histogram}. On the other hand, Assumption 2 does not involve the estimation procedure and does not require any verification.

\section{The Histogram method} \label{histogram}
In this section, we explain the histogram method for the setting with delayed rewards.
Partition $[0,1]^d$ into $M = (1/h)^d$ hyper-cubes with side width $h$, assuming $h$ is chosen such that $1/h$ is an integer. For some $x \in [0,1]^d$, let $J(x)$ denote the set of time points, for which the corresponding design points observed until time $n$ fall in the same cube as $x$, say $B(x)$, and for which the corresponding rewards are observed by time $n$. Let $N(x)$ denote the size of $J(x)$. That is, let $J(x) = \{j: X_j \in B(x), t_j \leq n\}$ and $N(x) = \sum_{j=1}^n I\{X_j \in B(x), t_j \leq n\}$. Furthermore, let $\bar{J}_i(x)$ be the subset of $J(x)$ corresponding to  arm $i$ and $\bar{N}_i(x)$ is the number of such time points, that is,
$\bar{J}_i(x) = \{j\in J(x): I_j = i \}$ and $\bar{N}_i(x) = \sum_{j=1}^n I\{I_j = i, X_j \in B(x), t_j \leq n\}$. Then the histogram estimate for $f_i(x)$ is defined as,
\begin{align*}
\hat{f}_{i,n}(x) = \frac{1}{\bar{N}_i(x)} \sum_{j \in \bar{J}_i(x)}
Y_j.
\end{align*}
For the estimator to behave well, a proper choice of the bandwidth, $h = h_n$ is necessary. Although one could choose different widths $h_{i,n}$ for estimating different $f_i$'s, for simplicity, the same bandwidth $h_n$ is used in the following sections.  
For notational convenience, when the analysis is focused on a single arm, $i$ is dropped from the subscript of $\hat{f}$, $\bar{N}$ and $\bar{J}$.

Other nonparametric methods like nearest-neighbors, kernel method, spline fitting and wavelets can also be considered for estimation. Assumption 1 could be verified for these methods using the same broad approach as illustrated in the following sections for the Histogram method, along with some method specific mathematical tools and assumptions.

 \subsection{Allocation with histogram estimates} \label{consistency_histogram}
Here, we show that the histogram estimation method along with the allocation scheme described in Section \ref{algorithm}, leads to strong consistency under some reasonable conditions on random errors, design distribution, mean reward functions and delays. As already discussed in Section \ref{consistency}, we only need to verify that Assumption 1 holds for histogram method estimators. Along with Assumption 2, we make the following assumptions.


\textbf{Assumption 3.} \label{ass_design_distribution} The design distribution $P_X$ is dominated by the Lebesgue measure with a density $p(x)$ uniformly bounded above and away from 0 on $[0,1]^d$; that is, $p(x)$ satisfies $\underbar{c} \leq p(x) \leq \bar{c}$ for some positive constants $\underbar{c} < \bar{c}$.

\textbf{Assumption 4.} \label{ass_bernstein_errors} The errors satisfy a moment condition that there exists positive constants $v$ and $c$ such that, for all $m \geq 2$, the Bernstein condition is satisfied, that is, $\E|\epsilon_{ij}|^m \leq \frac{m!}{2} v^2 c^{m-2}.$

\textbf{Assumption 5.}\label{ass_delay_independence} The delays, $\{d_j, j\geq 1\}$, are independent of each other, the choice of arms and also of the covariates.

\textbf{Assumption 6.} \label{assump_delay} Let the partial sums of delay distributions satisfy,
$\sum_{j=1}^n G_j(n-j) = \Omega(n^\alpha \log^\beta{n})$ \footnote{$f(n) = \Omega{(g(n))}$ if for some positive constant $c$,$f(n) \geq cg(n)$ when $n$ is large enough}  for some  $\alpha > 0$, $\beta \in \mathbb{R}$ or for $\alpha = 0$ and $\beta > 1$.

Note that, the choice $n^\alpha \log^\beta{n}$ could be generalized to a sub-linear function $q(n)$ with a growth rate faster than $\log{n}$.

\vspace{0.3cm}
\begin{definition} \label{mod_of_continuity}
Let $x_1, x_2 \in [0,1]^d$. Then $w(h;f)$ denotes a modulus of continuity defined by,
$
w(h;f) = \sup\{|f(x_1) - f(x_2)|: |x_{1k} - x_{2k}| \leq h \ \text{for all}\ 1 \leq k \leq d\}.$

\end{definition}

\subsection{Number of observations in a small cube for histogram estimation.}\label{A.3}
  From Assumption 3 and Assumption 5, we have that for a fixed cube $B$ with side width $h_n$ at time $n$,  
$
P(X_j \in B, t_j \leq n) = P(X_j \in B)P(t_j \leq n)  \geq \underbar{c}h_n^dG_j(n-j)$.
Let $N$ be the number of observations that fall in $B$ and are observed by time $n$, that is $N = \sum_{j=1}^n I_{\{X_j \in B, t_j\leq n\}}$. It is easily seen that $N$ is a random variable with expectation $\beta \geq \sum_{j=1}^{n}\underbar{c}h_n^dG_j(n-j)$. From the extended Bernstein inequality (see \textit{Appendix A.3} in \ref{Appendix}), we have
\begin{align}
\text{P}\left(N \leq \dfrac{\underbar{c}h_n^d \sum_{j=1}^n G_j(n-j)}{2}\right) \leq \exp\left(-\dfrac{3\underbar{c}h_n^d\sum_{j=1}^n G_j(n-j)}{28} \right). \label{no_of_obs}
\end{align}

\begin{lemma} \label{lemma_theorem}
Let $\epsilon > 0$ be given. Suppose that $h$ is small enough such that $w(h;f) < \epsilon$. Then the histogram estimator $\hat{f}_n$ satisfies,
\begin{align*}
  \text{P}_{A_n,X^n}(||\hat{f}_n - f||_\infty \geq \epsilon) &\leq M \exp\left(-\dfrac{3\pi_n \min_{1\leq b \leq M} N_b}{28} \right)\\
  &\quad \quad +2M \exp\left(- \dfrac{\min_{1\leq b \leq M} N_b \pi_n^2 (\epsilon - w(h;f))^2}{8(v^2 + c(\pi_n/2)(\epsilon-w(h;f)))} \right),
\end{align*}
where the probability $P_{A_n,X^n}$ denotes conditional probability given design points $X^n = (X_1,X_2,\hdots,X_n)$ and $A_n = \{j: t_j\leq n\}$. Here, $N_b$ is the number of design points for which the rewards have been observed by time $n$ such that they fall in the $b$th small cube of the partition of the unit cube at time $n$.
\end{lemma}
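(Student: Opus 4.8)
The plan is to exploit the piecewise-constant structure of the histogram estimator and reduce the sup-norm deviation to a finite union of per-cube tail events, each of which I then control by a bias--variance split followed by two applications of a Bernstein-type concentration bound.

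First I would reduce the sup norm to a maximum over the $M$ cubes. Since $\hat f_n$ is constant on each cube $B_b$, write its value there as $\bar N_b^{-1}\sum_{j\in\bar J_b}Y_j$, and for any $x\in B_b$ decompose
\begin{align*}
\frac{1}{\bar N_b}\sum_{j\in\bar J_b}Y_j-f(x)=\frac{1}{\bar N_b}\sum_{j\in\bar J_b}\bigl(f(X_j)-f(x)\bigr)+\frac{1}{\bar N_b}\sum_{j\in\bar J_b}\epsilon_j .
\end{align*}
Because $X_j$ and $x$ lie in the same cube of side $h$, the first (bias) term is bounded in absolute value by $w(h;f)$ by the definition of the modulus of continuity $w(h;f)$. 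Hence on the event $\{\|\hat f_n-f\|_\infty\ge\epsilon\}$ some cube must have noise average at least $\delta:=\epsilon-w(h;f)>0$, and a union bound over the $M$ cubes gives
\begin{align*}
\text{P}_{A_n,X^n}\!\left(\|\hat f_n-f\|_\infty\ge\epsilon\right)\le\sum_{b=1}^M \text{P}_{A_n,X^n}\!\left(\Bigl|\frac{1}{\bar N_b}\sum_{j\in\bar J_b}\epsilon_j\Bigr|\ge\delta\right).
\end{align*}

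Next, within each cube I would separate the two sources of randomness left after conditioning on $A_n$ and $X^n$: the adaptive arm selection (which makes the count $\bar N_b$ and the index set $\bar J_b$ random) and the errors. Conditioning fixes $J_b$ and $N_b$, so I split on whether $\bar N_b\ge \pi_n N_b/2$. On the complementary event I use that the randomization step forces $\text{P}(I_j=i\mid\text{past})\ge\pi_n$ for every $j\le n$ (since $\pi_j\ge\pi_n$ and the estimated-best arm is even more likely), so that $\bar N_b=\sum_{j\in J_b}I\{I_j=i\}$ is a sum of indicators each with conditional mean at least $\pi_n$; the extended Bernstein inequality (as applied in \eqref{no_of_obs}) then yields $\text{P}(\bar N_b\le\pi_n N_b/2)\le\exp(-3\pi_n N_b/28)$, and bounding $N_b$ below by $\min_b N_b$ and summing over cubes produces the first term.

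On the good event $\{\bar N_b\ge\pi_n N_b/2\}$ I rewrite the noise average over the fixed index set by setting $W_j:=\epsilon_j\,I\{I_j=i\}$, so that $\sum_{j\in\bar J_b}\epsilon_j=\sum_{j\in J_b}W_j$ and $\{\,|\bar N_b^{-1}\sum_{j\in\bar J_b}\epsilon_j|\ge\delta\,\}\cap\{\bar N_b\ge\pi_n N_b/2\}\subseteq\{\,|\sum_{j\in J_b}W_j|\ge \pi_n N_b\delta/2\,\}$. Since $\epsilon_j$ is independent of the past-measurable event $\{I_j=i\}$, the $W_j$ form a martingale difference sequence whose conditional moments inherit the Bernstein bound of Assumption 4, and the extended Bernstein inequality applied with threshold $t=\pi_n N_b\delta/2$ over the $N_b$ terms gives exactly the second term, again after replacing $N_b$ by $\min_b N_b$. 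Combining the two contributions with the union bound yields the stated inequality.

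\textbf{Main obstacle.} The delicate point is the two-layer randomness: $\bar N_b$ is itself random because of the data-dependent allocation, so neither the count in the denominator nor the index set of the noise sum is fixed. I expect the crux to be verifying that, after conditioning on $A_n$ and $X^n$, the selection indicators still obey the uniform lower bound $\pi_n$ and that the products $W_j=\epsilon_j I\{I_j=i\}$ genuinely form a martingale difference sequence with the required conditional Bernstein moments, so that the extended Bernstein inequality is legitimately applicable to an adaptively generated, non-i.i.d.\ sequence rather than to independent summands.
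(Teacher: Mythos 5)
Your proposal is correct and follows essentially the same route as the paper's proof: the bias--variance split via the modulus of continuity, the union bound over the $M$ cubes, the case split on whether $\bar N_b/N_b$ exceeds $\pi_n/2$ (handled by the extended Bernstein inequality for adaptively generated Bernoulli indicators), and the Bernstein-type bound for $\sum_{j\in J_b} I\{I_j=i\}\epsilon_j$ using the independence of the selection indicator from current and future errors. The "main obstacle" you flag is resolved in the paper exactly as you anticipate, by invoking the probability inequality for sums $\sum_j I_j\epsilon_j$ with $I_j$ independent of $\{\epsilon_l: l\ge j\}$.
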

\begin{proof}
The proof of Lemma \ref{lemma_theorem} is included in the supplementary materials (\textit{Appendix A.2} in Appendix).
\end{proof}

\begin{theorem}\label{Theorem2}
Suppose Assumptions 2-6 are satisfied. If for some $\alpha > 0$ and $\beta \in \mathbb{R}$ or $\alpha =0$ and $\beta > 1$, $h_n$ and $\pi_n$ are chosen to satisfy,
\begin{equation}
n^\alpha (\log{n})^{\beta -1} h_n^d \pi_n^2 \rightarrow \infty,   \label{condition_for_assumptionA}
\end{equation} 
then the allocation rule $\delta_\pi$ is strongly consistent.
\end{theorem}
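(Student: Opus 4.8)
The plan is to deduce Theorem \ref{Theorem2} from Theorem \ref{Theorem 1} by checking that, under Assumptions 3--6 and the rate condition \eqref{condition_for_assumptionA}, the histogram estimators meet Assumption 1. Since Assumption 2 is assumed outright, it suffices to establish $\|\hat f_{i,n}-f_i\|_\infty \to 0$ almost surely for each arm $1\le i\le \ell$; Theorem \ref{Theorem 1} then yields the strong consistency of $\delta_\pi$. I would prove this $L_\infty$ convergence by a Borel--Cantelli argument: fix an arm $i$ and $\epsilon>0$, and show $\sum_n P(\|\hat f_{i,n}-f_i\|_\infty \ge \epsilon)<\infty$. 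Because $h_n\to 0$ and $f_i$ is uniformly continuous on $[0,1]^d$, we have $w(h_n;f_i)<\epsilon$ for all large $n$, so Lemma \ref{lemma_theorem} is applicable.

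The main structural issue is that Lemma \ref{lemma_theorem} controls only the conditional probability given $(A_n,X^n)$, and does so through the data-dependent count $\min_{1\le b\le M}N_b$, which is itself $(A_n,X^n)$-measurable. I would therefore first secure a lower bound on this minimum. Writing $S_n=\sum_{j=1}^n G_j(n-j)$ and $\gamma_n=\underbar{c}\,h_n^d S_n/2$, split
\begin{align*}
P\big(\|\hat f_{i,n}-f_i\|_\infty \ge \epsilon\big)
\le P\Big(\min_{b} N_b \le \gamma_n\Big)
+ \E\Big[P_{A_n,X^n}\big(\|\hat f_n - f\|_\infty \ge \epsilon\big)\,\mathbb{1}\{\min_b N_b > \gamma_n\}\Big].
\end{align*}
A union bound over the $M$ cubes together with \eqref{no_of_obs} gives $P(\min_b N_b\le\gamma_n)\le M\exp(-3\underbar{c}\,h_n^d S_n/28)$ for the first term. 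For the second, since both exponential terms of the Lemma bound are decreasing in $\min_b N_b$, on the event $\{\min_b N_b>\gamma_n\}$ I replace $\min_b N_b$ by $\gamma_n$, obtaining
\begin{align*}
M\exp\left(-\dfrac{3\pi_n\gamma_n}{28}\right)
+2M\exp\left(-\dfrac{\gamma_n\pi_n^2(\epsilon-w(h_n;f))^2}{8\big(v^2+c(\pi_n/2)(\epsilon-w(h_n;f))\big)}\right).
\end{align*}

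It remains to show these three terms are summable in $n$. By Assumption 6, $S_n\ge c_0 n^\alpha(\log n)^\beta$ eventually, so the three exponents are bounded below by constant multiples of $h_n^d n^\alpha(\log n)^\beta$, $\pi_n h_n^d n^\alpha(\log n)^\beta$, and (using $\pi_n\to0$, so the denominator tends to $8v^2$, and $w(h_n;f)\to0$) $\pi_n^2 h_n^d n^\alpha(\log n)^\beta$, respectively. Factoring out one $\log n$ and using $\pi_n\le1$, each exponent is at least $(\log n)V_n$ with $V_n:=c'\pi_n^2 h_n^d n^\alpha(\log n)^{\beta-1}\to\infty$ by \eqref{condition_for_assumptionA}. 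That same condition forces $1/h_n^d\le n^\alpha(\log n)^{\beta-1}$ eventually, whence $\log M=O(\log n)$, and therefore $M\exp(-\text{exponent})\le\exp\big(O(\log n)-(\log n)V_n\big)\le n^{-2}$ for large $n$. Summability follows, Borel--Cantelli gives $\limsup_n\|\hat f_{i,n}-f_i\|_\infty\le\epsilon$ a.s., and letting $\epsilon\downarrow0$ along a rational sequence and taking the finite union over $i$ verifies Assumption 1; Theorem \ref{Theorem 1} then completes the proof.

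I expect the delicate point to be precisely this last accounting: extracting the extra factor of $\log n$ from $S_n\ge c_0 n^\alpha(\log n)^\beta$ so the exponential rate beats $\log M=O(\log n)$. This is exactly why condition \eqref{condition_for_assumptionA} carries $(\log n)^{\beta-1}$ rather than $(\log n)^\beta$, and getting the interplay among $\pi_n$, $h_n$, and $S_n$ right --- especially the $\pi_n\to0$ control of the Bernstein denominator --- is where the care is needed.
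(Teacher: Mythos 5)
Your proposal follows essentially the same route as the paper's own proof: the same split on the event $\{\min_b N_b > \underbar{c}\,h_n^d \sum_{j=1}^n G_j(n-j)/2\}$ via inequality \eqref{no_of_obs} and a union bound over the $M$ cubes, the same substitution of that lower bound into Lemma \ref{lemma_theorem}, and the same Borel--Cantelli conclusion verifying Assumption 1 so that Theorem \ref{Theorem 1} applies. The only difference is that you spell out the summability step (the paper states it as ``it can be shown''), and your accounting there --- extracting a factor of $\log n$, bounding $\log M = O(\log n)$ from \eqref{condition_for_assumptionA}, and controlling the Bernstein denominator via $\pi_n \to 0$ --- is correct.
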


\begin{proof}[\textbf{Proof of Theorem 2.}]
The histogram technique partitions the unit cube into $M = (1/h)^d$ small cubes. For each small cube $B_b, \ 1\leq b\leq M$, in the partition of the unit cube, let $N_b$ denote the number of time points, for which the corresponding design points fall in the cube $B_b$ and corresponding arm rewards are observed by time $n$. In other words, $N_b = \sum_{j=1}^n I_{\{X_j \in B_b, t_j \leq n\}}$. Using inequality \eqref{no_of_obs} we have, 
\begin{align}
 P&\left(N_b \leq \dfrac{ \underbar{c}h_n^d\sum_{j=1}^n G_j(n-j)}{2}\right) \leq \exp\left(-\dfrac{3 \underbar{c}h_n^d \sum_{j=1}^n G_j(n-j)}{28} \right) \nonumber \\ 
\Rightarrow P&\left(\min_{1\leq b \leq M}N_b \leq  \dfrac{ \underbar{c}h_n^d\sum_{j=1}^n G_j(n-j)}{2}\right) \leq M \exp\left(-\dfrac{3 \underbar{c}h_n^d \sum_{j=1}^n G_j(n-j)}{28} \right). \label{min_no_of_points}
 \end{align}
 Let $W_1,\hdots,W_n$ be Bernoulli random variables indicating whether the $i$th arm is selected $(W_j =1)$ for time point $j$, or not $(W_j =0)$. 
 Note that, conditional on the previous observations and $X_j$, the probability of $W_j =1$ is almost surely bounded below by $\pi_j \geq \pi_n$ for $1\leq j \leq n$. Let $w(h_n;f_i)$ be the modulus of continuity as in Definition \ref{mod_of_continuity}. Note that, under the continuity assumption of $f_i$, we have $w(h_n;f_i) \rightarrow 0$ as $h_n\rightarrow 0$. Thus, for any $\epsilon >0$, when $h_n$ is small enough, $\epsilon - w(h_n;f_i) \geq \epsilon/2$. 
Consider,
 \begin{align*}
 P(||\hat{f}_{i,n} - f_i||_\infty > \epsilon) &= P\left(||\hat{f}_{i,n} - f_i||_\infty > \epsilon,\min_{1\leq b \leq M}N_b \geq \dfrac{\underbar{c}h_n^d\sum_{j=1}^n G_j(n-j)}{2} \right) \\
 &\quad \quad +  P\left(||\hat{f}_{i,n} - f_i||_\infty > \epsilon,\min_{1\leq b \leq M}N_b < \dfrac{ \underbar{c}h_n^d\sum_{j=1}^n G_j(n-j)}{2} \right)\\
 &\leq \E P_{A_n,X^n}\left(||\hat{f}_{i,n} - f_i||_\infty > \epsilon, \min_{1\leq b \leq M}N_b \geq \dfrac{ \underbar{c}h_n^d\sum_{j=1}^n G_j(n-j)}{2} \right)\\
 &\quad \quad  +  P\left(\min_{1\leq b \leq M}N_b < \dfrac{ \underbar{c}h_n^d\sum_{j=1}^n G_j(n-j)}{2} \right),
 \end{align*}
 where $P_{A_n, X^n}$ denotes conditional probability given the design points until time $n$, $X^n = \{X_1,X_2,\hdots,X_n\}$ and the event, $A_n := \{j: t_j \leq n\}$. 

 From Lemma \ref{lemma_theorem}, we have that given the design points and the time points for which rewards were observed, for any $\epsilon > 0$, when $h$ is small enough,
 \begin{align*}
  \text{P}_{A_n,X^n}(||\hat{f}_n - f||_\infty \geq \epsilon) &\leq M \exp\left(-\dfrac{3\pi_n \min_{1\leq b \leq M} N_b}{28} \right)\\
  & \quad \quad +2M \exp\left(- \dfrac{\min_{1\leq b \leq M} N_b \pi_n^2 (\epsilon - w(h_n;f))^2}{8(v^2 + c(\pi_n/2)(\epsilon-w(h_n;f)))} \right).  
\end{align*}
 Using the above inequality and \eqref{min_no_of_points}, we have,
 \begin{align*}
 P(||\hat{f}_{i,n} - f_i||_\infty > \epsilon)&\leq 2M \exp\left(-\dfrac{\underbar{c}h_n^d(\sum_{j=1}^n G_j(n-j)) \pi_n^2 (\epsilon - w(h_n;f_i))^2}{16 (v^2 + c\pi_n/2(\epsilon - w(h_n;f_i)))} \right)\\ &\quad + M \exp\left(-\dfrac{3\underbar{c}h_n^d\pi_n \sum_{j=1}^n G_j(n-j)}{56} \right)+ \exp\left( -\dfrac{3 \underbar{c}h_n^d \sum_{j=1}^n G_j(n-j)}{28} \right).
 \end{align*}
 
 It can be shown that the above upper bound is summable in $n$ under the condition, 
 \begin{align}
 \dfrac{h_n^d\pi_n^2 \sum_{j=1}^nG_j(n-j)}{\log{n}} \rightarrow \infty. \label{condition_on_delaydist}
 \end{align}
It is easy to see that this follows from Assumption 6 and \eqref{condition_for_assumptionA}.

 Since $\epsilon$ is arbitrary, by the Borel-Cantelli lemma, we have that $||\hat{f}_{i,n} - f_i||_\infty \rightarrow 0$. This is true for all arms $1\leq i \leq \ell$. Hence, this completes the proof of Theorem \ref{Theorem2}.
 \end{proof}

\subsection{Effects of reward delay distributions}
As one would expect, the amount of delay in observing the rewards will have a considerable effect on the speed of sequential learning. In terms of treatment allocation, if there are substantial delays in observing patient responses for a particular treatment, the learning for that treatment will slow down and as a result the efficiency of the allocation strategy will decrease. Therefore, Assumption 6 imposes some restrictions on the delay distributions to ensure that at least a small proportion of rewards will be obtained in finite time. It is of interest to see how the delay distribution affects the rate at which $\pi_n$ and $h_n$ are allowed to decrease. This relationship can be understood by examining condition \eqref{condition_for_assumptionA} for 
Theorem \ref{Theorem2}. 

Note that Assumption 6 and \eqref{condition_for_assumptionA} in Theorem \ref{Theorem2} can be generalized to include any function $q(x)$ with at least a growth rate faster than logarithmic growth rate.  We assume 
 $\sum_{j=1}^n G_j(n-j) = \Omega\left(q(n)\right)$
 where $q(n)$ satisfies,
 $q(n)/\log(n) \rightarrow \infty$ \ as \ $n \rightarrow \infty$.
 Then it is easy to see that $h_n$ and $\pi_n$ can be chosen such that,
 \begin{align*}
 \dfrac{h_n^d \pi_n^2 q(n)}{\log(n)} \rightarrow \infty  \ \text{as} \ n \rightarrow \infty. 
 \end{align*} which implies condition \eqref{condition_on_delaydist} holds.
A possible advantage of this is that we allow a wide range of possible delay distributions with mild restrictions on the delays. Below, we consider some cases of the delay distributions and see how they effect exploration $(\pi_n)$ and bandwidth $(h_n)$ of the histogram estimator as time progresses.

 \begin{enumerate}
   \item \label{case1} In condition \eqref{condition_for_assumptionA}, $q(n) = n^\alpha \log^\beta{n}$ for $\alpha > 0$ and $\beta \in \mathbb{R}$ or $\alpha = 0$ and $\beta > 1$. Let us first consider the case when $\alpha = 0$ and $\beta > 1$, we have $q(n) = \log^\beta{n}$ for $\beta > 1$ and we want $\sum_{j=1}^n G_j(n-j) = \Omega(\log^\beta{n})$. Consider, $\pi_n = (\log{n})^{-(\beta - 1)/(2+d)}$ for $n > m_0$ and $\beta > 1$, then for \eqref{condition_on_delaydist} to hold we need the bandwith $h_n$ also to be of order $\Omega((\log{n})^{-(\beta - 1)/(2+d)})$. For example, $h_n = (\log{n})^{-(\beta - 1)/\beta(2+d)}$ would guarantee consistency. Notice that with these $\pi_n$ and $h_n$, one would spend a lot of time in exploration and the bandwidth would also decay very slowly which would effect the accuracy of the reward function estimates until $n$ is sufficiently large.

   Notice that the restriction of partial sum of probability distributions for the delays, being at least of the order $\log^\beta{n}$ gives the possibility of modeling cases with extremely large delays. For example, in clinical studies when the outcome of interest is survival time and we want to administer treatments for a disease such that the survival time is maximized. 
   With the unprecedented advances in drug development, the life expectancy of patients is more likely to increase, hence the survival time for a patient given any treatment would be large. Therefore, the assumption that partial sums of probability distributions for the delays until time $n$ need only be at least $\log^\beta{n}$ seems to be quite reasonable when the expected waiting times (in this case survival times) are long. For example, diseases like diabetes and hypertension which have a long survival time, since they cannot be cured, but can be controlled with medications. These diseases also have fairly high prevalence, so a large sample size to be able to get close to optimality would not be a problem. For such diseases, assuming that one would only observe the responses (survival times) of a small fraction of patients in finite time seems reasonable.

   \item For the case when $\alpha > 0$ and $\beta \in \mathbb{R}$, we have that $\sum_{j=1}^n G_j(n-j) = \Omega(n^\alpha\log^\beta{n})$. Consider, $\pi_n = n^{-{\alpha}/{(2+d)}}$ for $n > m_0$, then for the condition \eqref{condition_on_delaydist} to hold we need $h_n$ to also be of order $\Omega(n^{-{\alpha}/{(2 + d)}})$. For example, $h_n = n^{-{\alpha}/{2(2+d)}}$ results in 
   $
   h_n^d \pi_n^2 n^\alpha \log^{\beta - 1}{(n)} = n^{{\alpha d}/{2(2 + d)}} \log^{\beta - 1}{(n)} \rightarrow  \infty \ \text{as n $\rightarrow \infty$}$, irrespective of the value of $\beta$. Here the lower bound on the partial sums of probability distributions for the delays can grow faster than the previous case, depending on the values of $\alpha$ and $\beta$. 

   This restriction of order $n^\alpha (\log^\beta{n})$ can model cases with moderately large delays. From a clinical point of view, one could model diseases in which treatments show their effect in a short to moderate duration of time, for examples diseases like diarrhea, common cold, headache, and nutritional deficiencies. Here the response of interest would be improvement in the condition of a patient as a result of a treatment. For such diseases, one can expect to see the treatment
effects on patients in a short period of time. Hence, the delay in observing treatment results will not be too long. If the
response considered was survival (survived or not), then stroke
could also fall in this category because of high mortality.

 Note that, Assumption 6 only restricts on the proportion of rewards expected to be observed in the long run. Therefore, it is possible for strong consistency to be achieved even when there is infinite delay in observing the rewards of some arms (non-observance of some rewards).

    \end{enumerate}

\section{Simulation study}
We conduct a simulation study to compare the effect of different delay scenarios on the per-round average regret of our proposed strategy. The per-round regret is given by,
$
r_n(\delta) = \frac{1}{n} \sum_{j=1}^n (f^*(X_j) - f_{I_j}(X_j))$. 

Note that if $\frac{1}{n} \sum_{j=1}^n f^*(X_j)$ is eventually bounded above and away from 0 with probability 1, then $R_n(\delta) \rightarrow 1$ a.s. is equivalent to $r_n(\delta) \rightarrow 0$ a.s.
\begin{figure}[h!]
\centering
 \includegraphics[width=.40\textwidth]{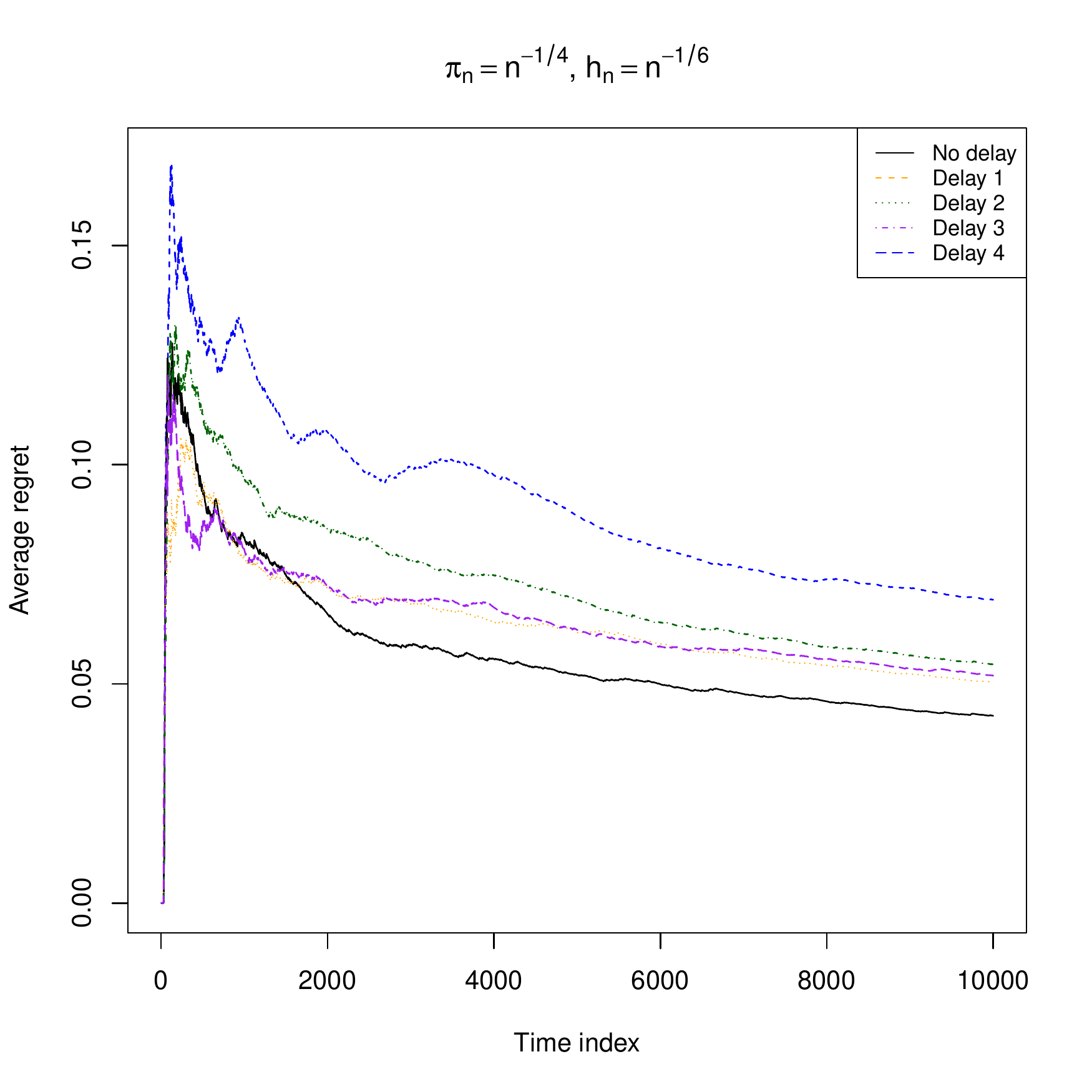}
  \includegraphics[width=.40\textwidth]{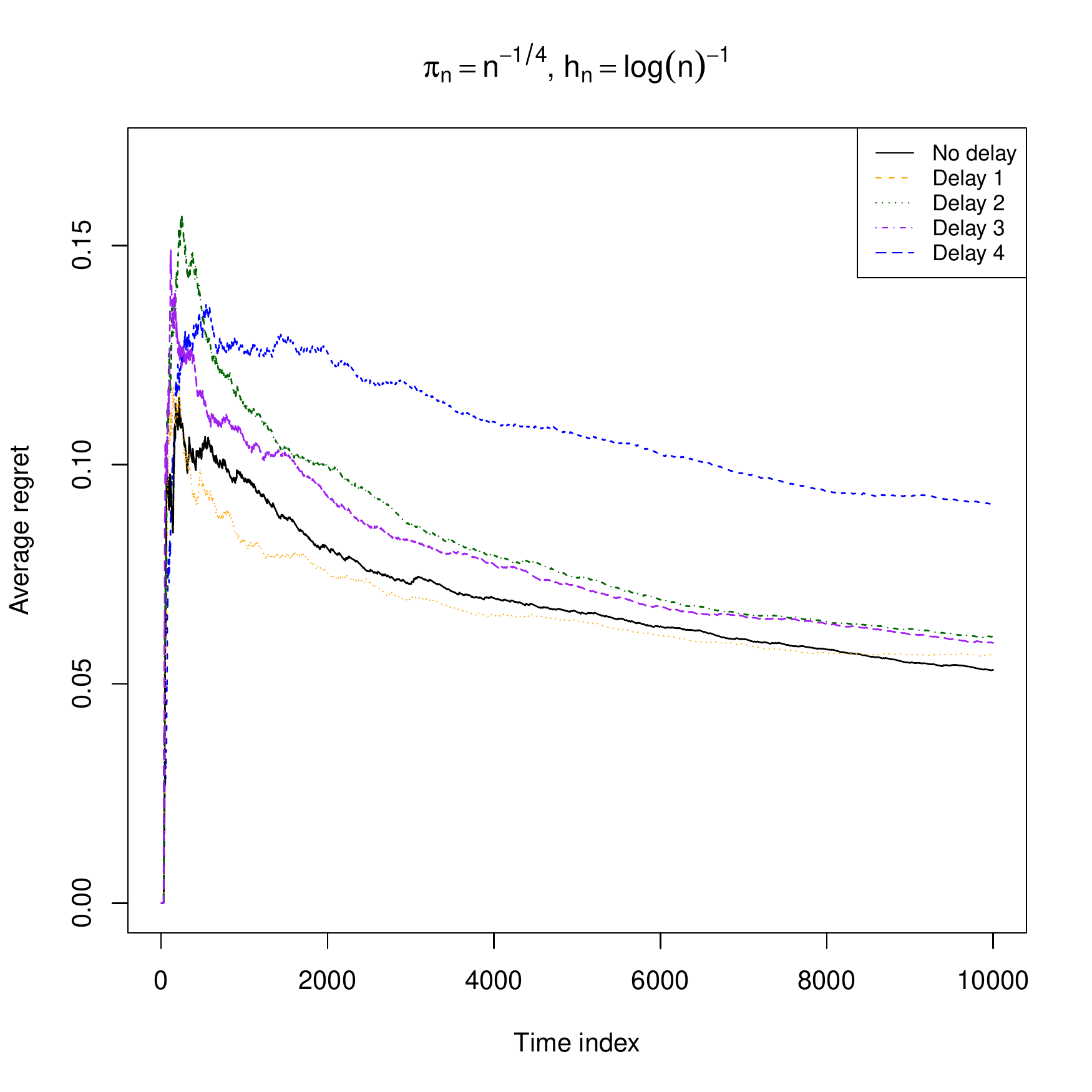}\\
\includegraphics[width=.40\textwidth]{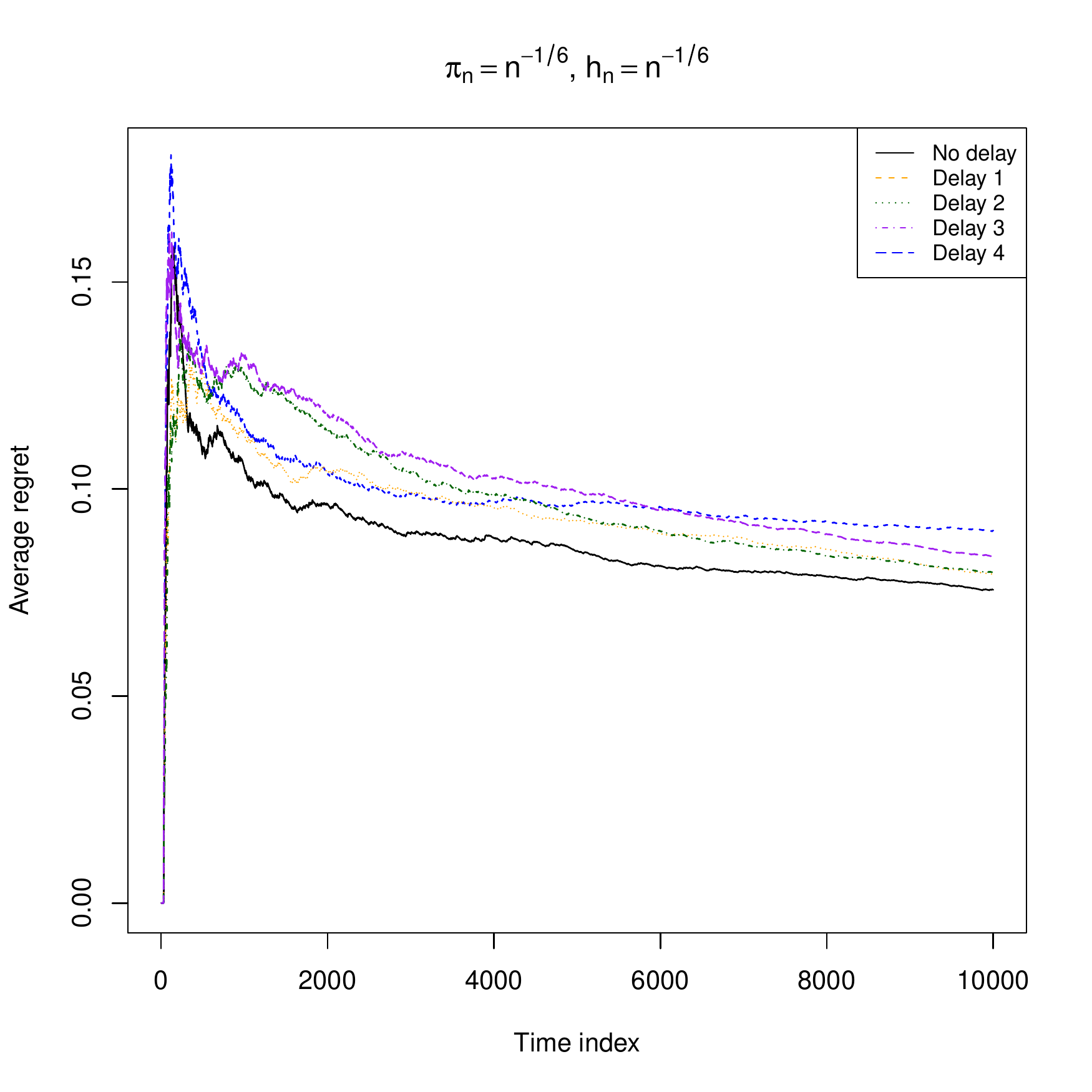}
\includegraphics[width=.40\textwidth]{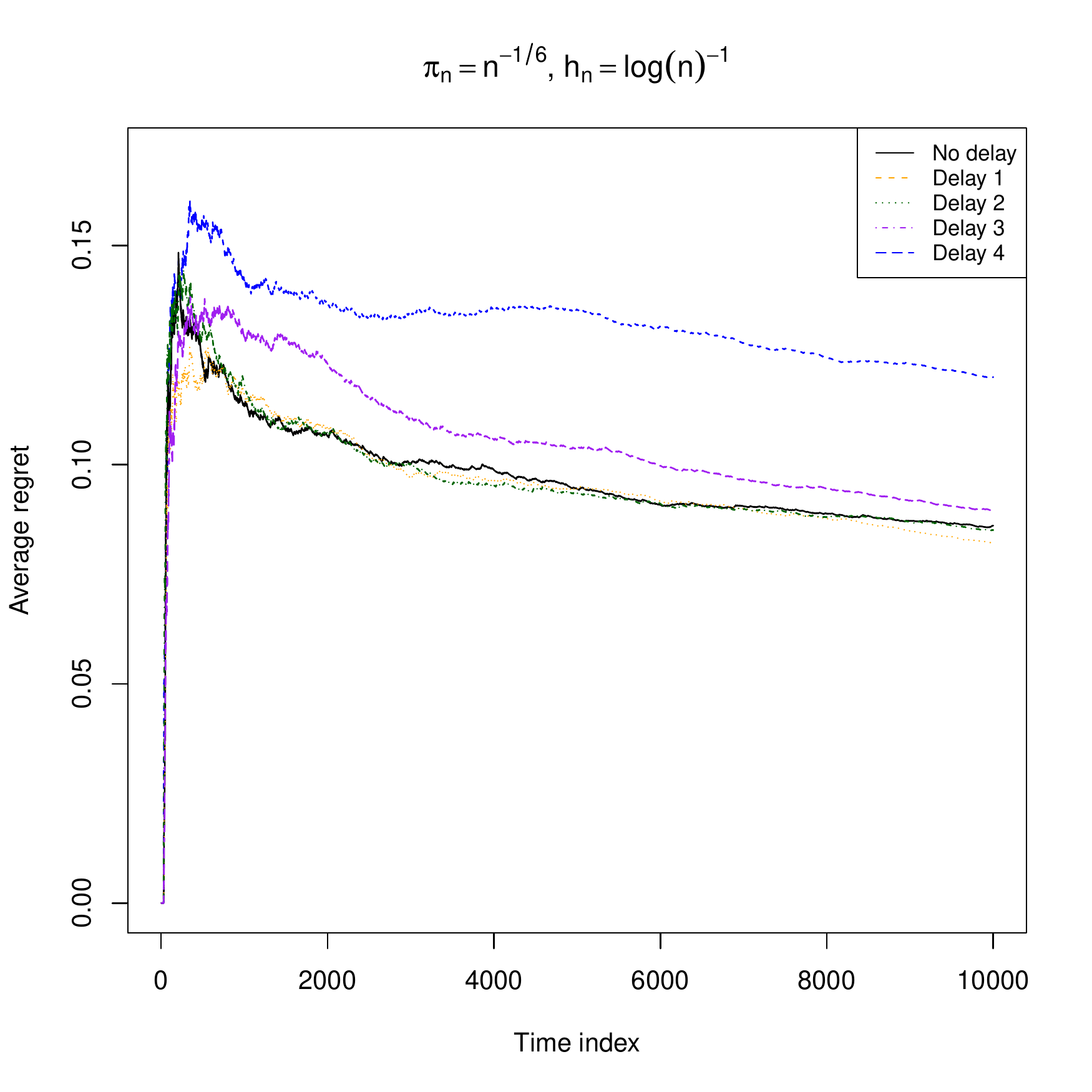}\\

\caption{Per-round regret for the proposed strategy for different delay scenarios. The grid of plots represent 4 different combination of choices for $\{\pi_n\}$ and $\{h_n\}$. For a given row, $\pi_n$ remains fixed and $h_n$ varies and vice versa for columns.}
\label{fig:simulations}

\end{figure}
\subsection{Simulation setup}
Consider number of arms, $\ell = 3$, and the covariate space to be two-dimensional, $d= 2$. Let $X_n = (X_{n1}, X_{n2})$ where $X_{ni} \overset{i.i.d.}{\sim} $ Unif$(0,1)$. We assume that the errors $\epsilon_n \sim 0.5 $N(0,1). The first 30 rounds were used for initialization. The following true mean reward functions are used, \begin{align*}
f_1(\mathbf{x}) = 0.7 (x_1 + x_2),\ 
f_2(\mathbf{x}) = 0.5 x_1^{0.75} + \sin(x_2), \ 
f_3(\mathbf{x})= \frac{2 x_1}{0.5 + (1.5 + x_2)^{1.5}}.
\end{align*}

We consider the following delay scenarios and run simulations until $N = 10000$.
1) \textit{No delay}; 2) \textit{Delay 1:} Geometric delay with probability of success (observing the reward) $p = 0.3$; 3) \textit{Delay 2:} Every 5\ts{th} reward is not observed by time $N$ and other rewards are obtained with a geometric ($p = 0.3$) delay;  4) \textit{Delay 3:} Each case has probability 0.7 to delay and the delay is half-normal with scale parameter, $\sigma = 1500$; 5) \textit{Delay 4:} In this case we increase the number of non-observed rewards. Divide the data into four equal consecutive parts (quarters), such that, in part 1, we only observe every 10\ts{th} (with Geom(0.3) delay) observation by time $N$ and not observe the remaining; in part 2, we only observe every 15\ts{th} observation; in part 3, only observe every 20\ts{th} observation; in part 4, only observe every 25\ts{th} observation. 

In Figure \ref{fig:simulations}, we plot the per-round regret vs time by delay type for four combinations of $\pi_n$ and $h_n$. 
As one would expect (see Figure \ref{fig:simulations}), the severity of delay has a clear effect on the regret, and for delay scenarios where a large number of rewards are not observed in finite time, the regret is comparatively higher. Note that most delay scenarios for which a substantial number of rewards can be obtained in finite time, tend to converge in quite similar patterns.

\textbf{Choice of $\{\pi_n\}$ and $\{h_n\}$: } 
According to Theorem 2, if $\pi_n$ and $h_n$ are chosen such that condition \eqref{condition_for_assumptionA} is met, consistency of the allocation rule follows. Therefore, for the case with $d=2$, which is the case of the simulation setting, we have to choose sequences slower than ($\pi_n = n^{-1/2}, h_n = n^{-1/2}$), even in the case of no delays.
Keeping this in mind, we chose two different choices of sequences for $\pi_n$ ($n^{-1/4}, n^{-1/6}$) and two choices of $h_n ((\log{n})^{-1}, n^{-1/6})$. 
Note that, in Figure \ref{fig:simulations}, for a given row, $\pi_n$ remains fixed while $h_n$ varies and vice versa for columns. It can be seen that the regret gets worse when $h_n$ decays too fast (in our range of n as $N = 10000$), specially for the scenario (Delay 4) with increasing number of non-observed rewards, possibly because of violation of condition \eqref{condition_for_assumptionA}. Also notice that, slow decaying $\pi_n$ has higher regret (last row).  This could be because of large randomization error that leads to high exploration price. In general, there are a large pool of choices for $h_n$ and $\pi_n$ that satisfy equation \eqref{condition_for_assumptionA} as can be seen from the Figure \ref{fig:simulations}. However, a thorough understanding of the finite-time regret rates and further research would be needed to evaluate optimal choices of $\{\pi_n\}$ and $\{h_n\}$ for a given scenario.

\section{Conclusion}
In this work we develop an allocation rule for multi-armed bandit problem with covariates when there is delay in observing rewards. We show that strong consistency can be established for the proposed allocation rule using the histogram method for estimation, under reasonable restrictions on the delay distributions and also illustrate that using a simulation study. Our approach on modeling reward delays is different from the previous work done in this field because, 1) we use nonparametric estimation technique to estimate the functional relationship between the rewards and covariates and 2) we allow for delays to be unbounded with some assumptions on the delay distributions. The assumptions impose mild restrictions on the delays in the sense that they allow for the possibility of non-observance of some rewards as long as a certain proportion of rewards are obtained in finite time. With this general setup, it is possible to model many different situations including the one with no delays. The conditions on the delay distributions easily allow for large delays as long as they grow at a certain minimal rate. This obviously will result in slower rate of convergence because of longer time spent in exploration. Ideally, we would like our allocation scheme to devise the optimal treatments sooner, for which we would need to impose stricter conditions on the delay distributions. Therefore, working on finite-time analysis for the setting with delayed rewards seems to be an immediate future direction. In addition, we assume some knowledge on the delay distributions, so for situations where there is little understanding of the delays, a different approach might be needed, such as a methodology which adaptively updates the delay distributions.

\appendix 
\section{Appendix}\label{Appendix}
\subsection{\bf Proof of consistency of the proposed strategy}\label{Proof_consistency_YangZhu}
\begin{proof}[Proof of Theorem 1]
Since the ratio $R_n(\delta_\pi){}$ is always upper bounded by 1, we only need to work on the lower bound direction. Note that,
\begin{align*}
R_n(\delta_\pi) &= \dfrac{\sum_{j=1}^n f_{\hat{i}_j}(X_j)}{\sum_{j=1}^n f^*(X_j)} + \dfrac{\sum_{j=1}^n (f_{I_j}(X_j) - f_{\hat{i}_j}(X_j))}{\sum_{j=1}^n f^*(X_j)}\\
&\geq \dfrac{\sum_{j=1}^n f_{\hat{i}_j}(X_j)}{\sum_{j=1}^n f^*(X_j)} - \dfrac{\frac{1}{n} \sum_{j=1}^n A I_{\{I_j \neq \hat{i}_j\}}}{\frac{1}{n} \sum_{j=1}^n f^*(X_j)}, \label{inequality_begin}
\end{align*}
where the inequality follows from Assumption 2. Let $U_j = I_{\{I_j \neq \hat{i}_j\}}$. Since $(1/n)\sum_{j=1}^n f^*(X_j)$ converges a.s. to $\E f^*(X) > 0$, the second term on the right hand side in the above inequality converges to zero almost surely if $({1}/{n}) \sum_{j=1}^n U_j \overset{\text{a.s.}}{\rightarrow} 0$.
Note that for $j \geq m_0 +1$, $U_j$'s are independent Bernoulli random variables with success probability $(\ell-1)\pi_j$.
Since,
\begin{align*}
\sum_{j=m_0+1}^\infty \text{Var} \left(\dfrac{U_j}{j}\right) = \sum_{j=m_0+1}^{\infty} \dfrac{(\ell - 1)\pi_j (1-(\ell-1)\pi_j)}{j^2} < \infty.
\end{align*}
we have that $\sum_{m_0+1}^\infty ((U_j - (\ell -1)\pi_j)/j)$ converges almost surely. It then follows by Kronecker's lemma that,
\begin{align*}
\dfrac{1}{n} \sum_{j=1}^n (U_j - (\ell-1)\pi_j) \overset{\text{a.s.}}{\rightarrow} 0.
\end{align*}
We know that $\pi_j \rightarrow 0$ as $j \rightarrow \infty$ (the speed depending on the delay times). Thus, we will have ${1}/{n} \sum_{j=1}^n (\ell-1)\pi_j \rightarrow 0$ since $\pi_j\rightarrow 0$ as $j\rightarrow \infty$. Hence, ${1}/{n} \sum_{j=1}^n U_j\rightarrow 0$ a.s.

To show that $R_n(\delta_\pi) \overset{\text{a.s.}}{\rightarrow} 1$, it remains to show that
\begin{align*}
 \dfrac{\sum_{j=1}^n f_{\hat{i}_j}(X_j)}{\sum_{j=1}^n f^*(X_j)} \overset{\text{a.s.}}{\rightarrow} 1 \ \text{or equivalently,}\ \dfrac{\sum_{j=1}^n (f_{\hat{i}_j}(X_j) - f^*(X_j))}{\sum_{j=1}^n f^*(X_j)} \overset{\text{a.s.}}{\rightarrow} 0.
 \end{align*} 
Recall from Section 3.1, 
given the observed reward timings $\{t_j: t_j \leq n , 1\leq j \leq n\}$, let $\{s_k: k =1,\hdots, N_n\}$ be the reordered sequence of the observed reward timings, arranged in an increasing order. Then for any $j, m_0+1\leq j \leq n$, there exists an $s_{k_j}, k_j\in \{1,\hdots, N_n\}$ such that $s_{k_j} \leq j < s_{k_j+1}$. Also, note that as $j \rightarrow \infty$, we also have that $k_j \rightarrow \infty$. By the definition of $\hat{i}_j$, for $j \geq m_0 + 1$, $\hat{f}_{\hat{i}_j,s_{k_j}}(X_j) \geq \hat{f}_{i^*(X_j),s_{k_j}}(X_j)$ and thus,
\begin{align*}
f_{\hat{i}_j}(X_j) - f^*(X_j) &= f_{\hat{i}_j}(X_j) - \hat{f}_{\hat{i}_j,s_{k_j}}(X_j) + \hat{f}_{\hat{i}_j, s_{k_j}}(X_j) - \hat{f}_{i^*(X_j),s_{k_j}}(X_j)\\
&\quad \quad \quad + \hat{f}_{i^*(X_j),s_{k_j}}(X_j) - f^*(X_j)\\
&\geq f_{\hat{i}_j}(X_j) - \hat{f}_{\hat{i}_j, s_{k_j}}(X_j) + \hat{f}_{i^*(X_j),s_{k_j}}(X_j) - f_{i^*(X_j)}(X_j)\\
&\geq -2 \sup_{1\leq i \leq \ell} ||\hat{f}_{i,s_{k_j}} - f_i||_\infty.
\end{align*}
For $1\leq j \leq m_0$, we have $f_{\hat{i}_j}(X_j) - f^*(X_j) \geq -A$.  Based on Assumption A, $||\hat{f}_{i,s_{k_j}} - f_i||_\infty \overset{\text{a.s.}}{\rightarrow} 0$ as $j \rightarrow \infty$ for each $i$, and thus $\sup_{1\leq i \leq \ell} || \hat{f}_{i,s_{k_j}} - f_i||_\infty \overset{\text{a.s.}}{\rightarrow} 0$. Then it follows that, for $n > m_0$,
\begin{align*}
&\dfrac{\sum_{j=1}^n (f_{\hat{i}_j}(X_j) - f^*(X_j))}{\sum_{j=1}^n f^*(X_j)} \\
&\quad \quad \geq \dfrac{-Am_0/n - (2/n)\sum_{j=m_0+1}^n \sup_{1\leq i \leq \ell} ||\hat{f}_{i,s_{k_j}} - f_i||_\infty}{(1/n)\sum_{j=1}^n f^*(X_j)}.
\end{align*}
The right hand side converges to 0 almost surely and hence the conclusion follows. 
\end{proof}

\subsection{\bf A probability bound on the performance of the histogram method}\label{A.1}
Consider the regression model as in Section 2, with $i$ dropped for notational convenience.
\begin{align*}
Y_{j} = f(x_j) + \epsilon_{j},
\end{align*}
where $\epsilon_j$'s are independent errors satisfying the moment condition in Assumption 4 of Section 4.1. Let $W_1,\hdots, W_n$ are Bernoulli random variables that decide if arm $i$ is observed or not, that is $W_j = I_{\{I_j = i\}}$. Assume, for each $1\leq j \leq n$, $W_j$ is independent of $\{\epsilon_k: k\geq j\}$. Let $\hat{f}_n$ be the histogram estimator of $f$. Let $w(h;f)$ denote a modulus of continuity defined by,
\begin{equation}
  w(h;f) = \sup\{|f(x_1) - f(x_2)|: |x_{1k} -x_{2k}| \leq h \ \text{for all} \ 1\leq k \leq d\}.
\end{equation}
Let $A_n$ denote the event consisting of the indices (time points) for which the rewards were observed by time $n$, that is $A_n := \{j: t_j \leq n\}$ and $X^n = \{X_1,X_2,\hdots,X_n\}$, the design points until time $n$.

\begin{lemma} \label{lemma1_appendix}
Let $\epsilon > 0$ be given. Suppose that $h$ is small enough that $w(h;f) < \epsilon$. Then the histogram estimator $\hat{f}_n$ satisfies,
\begin{align*}
  \text{P}_{A_n,X^n}(||\hat{f}_n - f||_\infty \geq \epsilon) &\leq M \exp\left(-\dfrac{3\pi_n \min_{1\leq b \leq M} N_b}{28} \right) \\
  & \quad \quad +2M \exp\left(- \dfrac{\min_{1\leq b \leq M} N_b \pi_n^2 (\epsilon - w(h;f))^2}{8(v^2 + c(\pi_n/2)(\epsilon-w(h;f)))} \right).
\end{align*}
where the probability $P_{A_n,X^n}$ denotes conditional probability given design points $X^n = (X_1,X_2,\hdots,X_n)$ and $A_n = \{j: t_j\leq n\}$. Here, $N_b$ is the number of design points for which the rewards have been obtained by time $n$ such that they fall in the $b$th small cube of the partition of the unit cube at time $n$.
\end{lemma}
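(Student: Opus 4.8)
The plan is to reduce the $L_\infty$ statement to a finite family of per-cube tail bounds and then control the error inside each cube by separating a deterministic bias from a stochastic fluctuation. Since $\hat f_n$ is piecewise constant on the $M$ cubes, for any $x$ in cube $B_b$ I would write
\[
\hat f_n(x) - f(x) = \frac{1}{\bar N_b}\sum_{j \in \bar J_b}\bigl(f(X_j) - f(x)\bigr) + \frac{1}{\bar N_b}\sum_{j \in \bar J_b}\epsilon_j ,
\]
and bound the first (bias) term by $w(h;f)$, because every $X_j$ with $j \in \bar J_b$ lies in the same cube as $x$. Hence the event $\{\sup_{x\in B_b}|\hat f_n(x)-f(x)|\geq \epsilon\}$ is contained in the event that the averaged error $|\bar N_b^{-1}\sum_{j\in\bar J_b}\epsilon_j|$ exceeds $\epsilon - w(h;f)$, which no longer depends on $x$; a union bound over the $M$ cubes then produces the factor $M$ in front of both terms.

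Next I would split each per-cube probability according to whether arm $i$ has been selected often enough there, i.e.\ on the event $\{\bar N_b \geq \tfrac12\pi_n N_b\}$ and its complement. On the complement, I would note that $\bar N_b = \sum_{j\in J_b} W_j$ is a sum of arm-selection indicators whose conditional success probabilities are bounded below by $\pi_j \geq \pi_n$ (the estimated best arm is chosen with probability $1-(\ell-1)\pi_j \geq \pi_j$ once $\pi_j \leq 1/\ell$, and every other arm with probability exactly $\pi_j$), so that the conditional mean of $\bar N_b$ is at least $\pi_n N_b$. A lower-tail Bernstein/Chernoff estimate at deviation $\tfrac12\pi_n N_b$ yields $\exp(-3\pi_n N_b/28)$; replacing $N_b$ by $\min_{1\leq b\leq M} N_b$ and summing over the cubes gives the first term.

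On the favorable event $\{\bar N_b \geq \tfrac12\pi_n N_b\}$, the inequality $|\bar N_b^{-1}\sum_{j\in\bar J_b}\epsilon_j| \geq \epsilon - w(h;f)$ forces $|\sum_{j\in J_b} W_j\epsilon_j| \geq \tfrac12\pi_n N_b\,(\epsilon - w(h;f))$, so I would apply Bernstein's inequality to the weighted sum $\sum_{j\in J_b} W_j\epsilon_j$. Each summand $W_j\epsilon_j$ is mean zero, and since $W_j\in\{0,1\}$ is independent of $\epsilon_j$ it inherits the moment condition $\E|W_j\epsilon_j|^m = \E[W_j]\,\E|\epsilon_j|^m \leq \tfrac{m!}{2}v^2 c^{m-2}$ from Assumption 4, while the total variance proxy over the $N_b$ terms is at most $N_b v^2$. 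Taking the deviation level $t = \tfrac12\pi_n N_b(\epsilon - w(h;f))$ in the Bernstein bound $2\exp\bigl(-t^2/[2(N_b v^2 + ct)]\bigr)$ and simplifying reproduces precisely the exponent in the second term (the $\pi_n^2$ and the denominator $8(v^2 + c(\pi_n/2)(\epsilon - w(h;f)))$ arising directly from squaring $t$ and factoring out $N_b$), and the union over cubes supplies the factor $2M$.

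I expect the main obstacle to be the dependence between the selection indicators and the past errors: $W_j$ is a function of the previously observed rewards, hence of earlier errors, so the $W_j\epsilon_j$ are not independent and neither concentration step can invoke the textbook i.i.d.\ versions directly. The resolution is to work with the natural filtration and use only the per-step conditional independence $W_j \perp \epsilon_j$ together with the uniform lower bound $\pi_n$ on the conditional selection probability, applying martingale (conditional) forms of the Bernstein and Chernoff inequalities. Verifying that the moment condition transfers to the conditionally-centered weighted increments, and that the randomness of the denominator $\bar N_b$ is correctly absorbed by the good-event split, is the delicate part of the argument.
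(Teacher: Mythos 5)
Your proposal follows essentially the same route as the paper's proof: the same bias--fluctuation decomposition with the bias controlled by $w(h;f)$, the same union bound over the $M$ cubes, the same split on the event $\{\bar N_b \geq \tfrac12\pi_n N_b\}$, a lower-tail Bernstein bound for the selection counts giving the $\exp(-3\pi_n N_b/28)$ term, and a Bernstein bound for $\sum_{j\in J_b}W_j\epsilon_j$ (valid under the one-sided independence $W_j \perp \{\epsilon_k : k\geq j\}$, exactly as the paper invokes via its Appendix A.4 lemma) giving the second term with the correct constants. The dependence issue you flag is precisely the one the paper resolves by citing the extended Bernstein inequality for conditionally lower-bounded Bernoulli trials and the Yang--Zhu lemma for weighted error sums, so the argument is correct as proposed.
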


\begin{proof}[Proof of lemma \ref{lemma1_appendix}]
Note that the above inequality trivially holds if $\min_{1\leq b \leq M} N_b = 0$. Therefore,
let's assume that $\min_{1\leq b\leq M} N_b > 0$. Let $N(x)$ denote the number of time points, for which the corresponding design points $x_j$'s fall in the same cube as $x$ and for which the corresponding rewards are observed by time $n$. Let $J(x)$ denote the set of indices $1\leq j \leq n$ of such design points. Let $\bar{J}(x)$ be the subset of $J(x)$ where arm $i$ is chosen (i.e. where $W_j =1$) and let $\bar{N}(x)$ be the number of such design points (note that $i$ is dropped for notational convenience). 

For arm $i$, we consider the histogram estimator
\begin{align*}
\hat{f}_{n}(x) &= \dfrac{1}{\bar{N}(x)}\sum_{j \in \bar{J}(x)} Y_j \\
& = f(x) + \dfrac{1}{\bar{N}(x)}\sum_{j \in \bar{J}(x)} (f(x_j) - f(x)) + \dfrac{1}{\bar{N}(x)}\sum_{j \in\bar{J}(x)} \epsilon_j\\
\Rightarrow |\hat{f}_n(x)& - f(x)| \leq w(h;f) + \left|\dfrac{1}{\bar{N}(x)}\sum_{j \in \bar{J}(x)}\epsilon_j \right|,
 \end{align*}
 where $w(h;f)$ is the modulus of continuity. 
 For any $\epsilon > w(h;f)$, with the given design points and the time points for which rewards have been observed by time $n$,
 \begin{align*}
 P_{A_n,X^n}(||\hat{f}_n - f||_\infty \geq \epsilon) \leq P_{A_n,X^n}\left(\sup_x \left|\dfrac{1}{\bar{N}(x)}\sum_{j \in \bar{J}(x)} \epsilon_j \right| \geq \epsilon  -w(h;f) \right). 
 \end{align*}
 Note that, in the same small cube $B$, $N(x)\ \text{and}\ \bar{N}(x), J(x)\ \text{and}\ \bar{J}(x)$ are the same for any $x$, respectively. Let $x_0$ be a fixed point in $B$. Then consider,
 \begin{align*}
 P_{A_n,X^n}&\left(\sup_{x \in B} \left|\dfrac{1}{\bar{N}(x)} \sum_{j\in \bar{J}(x)} \epsilon_j \right| \geq \epsilon- w(h;f)\right)\\
& = P_{A_n,X^n}\left(\left| \sum_{j\in \bar{J}(x_0)} \epsilon_j \right| \geq \bar{N}(x_0) (\epsilon- w(h;f)) \right)\\
& = P_{A_n,X^n}\left(\left| \sum_{j\in J(x_0)} W_j \epsilon_j \right| \geq N(x_0) \dfrac{\bar{N}(x_0)}{N(x_0)} (\epsilon- w(h;f)) \right)\\
& = P_{A_n,X^n}\left(\left| \sum_{j\in J(x_0)} W_j \epsilon_j \right| \geq N(x_0) \dfrac{\bar{N}(x_0)}{N(x_0)} (\epsilon- w(h;f)), \dfrac{\bar{N}(x_0)}{N(x_0)} > \dfrac{\pi_n}{2} \right) \\
&\quad \quad+ P_{A_n,X^n}\left(\left| \sum_{j\in J(x_0)} W_j \epsilon_j \right| \geq N(x_0) \dfrac{\bar{N}(x_0)}{N(x_0)} (\epsilon- w(h;f)), \dfrac{\bar{N}(x_0)}{N(x_0)} \leq \dfrac{\pi_n}{2} \right) \\
& \leq P_{A_n,X^n}\left(\left| \sum_{j\in J(x_0)} W_j \epsilon_j \right| \geq N(x_0) \dfrac{\pi_n}{2} (\epsilon- w(h;f)) \right) + P_{A_n,X^n}\left(\dfrac{\bar{N}(x_0)}{N(x_0)} \leq \dfrac{\pi_n}{2} \right)\\
& \leq 2 \exp\left(- \dfrac{N(x_0) \pi_n^2 (\epsilon - w(h;f))^2}{8(v^2 + c\pi_n/2 (\epsilon - w(h;f)))} \right) + \exp\left(- \dfrac{3 N(x_0) \pi_n}{28} \right),
\end{align*}
where the last inequality follows from inequality \eqref{bernstein_inequality} in \ref{A.4} and \eqref{binomial_inequality} in \ref{A.2} respectively. For applying \eqref{bernstein_inequality}, we used the fact that $W_j$ is independent of the $\epsilon_{ik}$'s for all $k\geq j$ since $W_j$ depends only on the previous observations and $X_j$.

 Given that $N_b$ be the number of design points in the $b$th small cube whose rewards are observed by time $n$, we have
 \begin{align*}
 P_{A_n,X^n}(||\hat{f}_n - f||_\infty) &\leq M \exp\left(- \dfrac{3 (\min_{1\leq b \leq M}N_b) \pi_n}{28} \right) \\
 & \quad \quad + 2M \exp\left(- \dfrac{(\min_{1\leq b \leq M}N_b) \pi_n^2 (\epsilon - w(h;f))^2}{8(v^2 + c(\pi_n/2) (\epsilon - w(h;f)))} \right).
 \end{align*}
 This concludes the proof of Lemma 1.
\end{proof}

\subsection{\bf An inequality for Bernoulli trials.}\label{A.2}
For $1\leq j \leq n$, let $\tilde{W}_j$ be Bernoulli random variables, which are not necessarily independent. Assume that the conditional probability of success for $\tilde{W}_j$ given the previous observations is lower bounded by $\beta_j$, that is,
\begin{align*}
P(\tilde{W}_j = 1|\tilde{W}_i, 1 \leq i \leq j-1) \geq \beta_j \ \text{a.s.},
\end{align*}
for all $1\leq j\leq n$. 
Appylying the extended Bernstein's inequality as described in \cite{qian2016kernel}, we have
\begin{align}
P\left(\sum_{j=1}^n \tilde{W}_j \leq \left(\sum_{j=1}^n \beta_j\right)/2 \right) \leq \exp\left(- \dfrac{3\sum_{j=1}^n \beta_j}{28} \right).  \label{binomial_inequality}
\end{align}


\subsection{\bf A probability inequality for sums of certain random variables.} \label{A.4} 
 Let $\epsilon_1,\epsilon_2,\hdots$ be independent random variables satisfying the refined Bernstein condition in Assumption 3. Let $I_1,I_2,\hdots$ be Bernoulli random variables such that $I_j$ is independent of $\{\epsilon_l: l\geq j\}$ for all $j \geq 1$.

\begin{lemma}
For any $\epsilon >0$,
\begin{align}
P\left(\sum_{j=1}^n I_j \epsilon_j \geq n\epsilon\right) \leq \exp\left(-\dfrac{n\epsilon^2}{v^2 + c\epsilon}\right).\label{bernstein_inequality}
\end{align}
\end{lemma}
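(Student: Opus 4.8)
The plan is to prove this by the standard exponential-moment (Chernoff) method, the only real subtlety being that the summands $I_j\epsilon_j$ are \emph{not} independent, so I cannot invoke an off-the-shelf Bernstein bound for independent variables. For any $\lambda>0$, Markov's inequality applied to $\exp\bigl(\lambda\sum_{j=1}^n I_j\epsilon_j\bigr)$ gives
\[
P\left(\sum_{j=1}^n I_j\epsilon_j \geq n\epsilon\right) \leq e^{-\lambda n\epsilon}\,\E\left[\exp\left(\lambda\sum_{j=1}^n I_j\epsilon_j\right)\right],
\]
so the whole problem reduces to controlling the joint moment generating function and then optimizing the free parameter $\lambda$.

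First I would dispose of the dependence. Introduce the filtration $\mathcal{F}_j = \sigma(\epsilon_1,\dots,\epsilon_{j-1},I_1,\dots,I_j)$, under which $I_j$ is $\mathcal{F}_j$-measurable while $\epsilon_j$ is independent of $\mathcal{F}_j$ and mean zero; this is precisely what the hypothesis ``$I_j$ is independent of $\{\epsilon_\ell:\ell\geq j\}$'' (together with independence of the errors) is meant to supply. Peeling off the last summand by the tower property, and noting that $\exp\bigl(\lambda\sum_{j=1}^{n-1}I_j\epsilon_j\bigr)$ is $\mathcal{F}_n$-measurable, I get
\[
\E\left[\exp\left(\lambda\sum_{j=1}^n I_j\epsilon_j\right)\right] = \E\left[\exp\left(\lambda\sum_{j=1}^{n-1}I_j\epsilon_j\right)\,\E\!\left[\exp(\lambda I_n\epsilon_n)\mid\mathcal{F}_n\right]\right].
\]
Since $I_n\in\{0,1\}$ is $\mathcal{F}_n$-measurable and $\epsilon_n$ is independent of $\mathcal{F}_n$, the inner conditional expectation equals $1$ when $I_n=0$ and $\E[e^{\lambda\epsilon_n}]$ when $I_n=1$, and in both cases is at most $\max_{1\leq j\leq n}\E[e^{\lambda\epsilon_j}]$, a deterministic constant. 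Iterating this peeling $n$ times yields $\E\bigl[\exp(\lambda\sum_{j=1}^n I_j\epsilon_j)\bigr]\leq\bigl(\max_j\E[e^{\lambda\epsilon_j}]\bigr)^n$.

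Next I would bound the one-dimensional moment generating function using the Bernstein moment condition of Assumption 4 and $\E\epsilon_j=0$. Expanding the exponential, the linear term vanishes, and using $\E[\epsilon_j^m]\leq\E|\epsilon_j|^m\leq\frac{m!}{2}v^2c^{m-2}$ the remaining series is geometric and convergent for $0<\lambda c<1$:
\[
\E[e^{\lambda\epsilon_j}] \leq 1+\frac{v^2\lambda^2}{2}\sum_{m\geq 2}(\lambda c)^{m-2} = 1+\frac{v^2\lambda^2}{2(1-\lambda c)} \leq \exp\!\left(\frac{v^2\lambda^2}{2(1-\lambda c)}\right),
\]
the last step being $1+x\leq e^x$. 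Combining with the factorization and the Chernoff bound leaves the exponent $-\lambda n\epsilon+\tfrac{nv^2\lambda^2}{2(1-\lambda c)}$.

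Finally I would optimize over $\lambda\in(0,1/c)$: the choice $\lambda=\epsilon/(v^2+c\epsilon)$ lies in this range, makes $1-\lambda c=v^2/(v^2+c\epsilon)$, and collapses the exponent into the Bernstein form $-n\epsilon^2/(v^2+c\epsilon)$ claimed in the statement, completing the bound. I expect the only genuinely delicate step to be the factorization of the joint moment generating function across the dependent summands; once the filtration is arranged so that each $I_j$ is past-measurable and each $\epsilon_j$ is independent of the past and centered, everything else is the routine centered-Bernstein computation.
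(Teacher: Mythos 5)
Your overall route is the right one, and it is essentially the only proof on offer: the paper gives no argument for this lemma at all, deferring to \cite{yang2002randomized}, and the standard derivation is exactly your Chernoff bound, conditional peeling of the moment generating function along a filtration, the Bernstein moment bound $\E[e^{\lambda\epsilon_j}]\leq\exp\bigl(v^2\lambda^2/(2(1-\lambda c))\bigr)$, and optimization over $\lambda$. The peeling step does require $\epsilon_n$ to be independent of the joint $\sigma$-field $\sigma(\epsilon_1,\dots,\epsilon_{n-1},I_1,\dots,I_n)$, which is slightly stronger than the hypotheses as literally stated (mutual independence of the errors plus, for each $j$, independence of $I_j$ from $\{\epsilon_l:l\geq j\}$), but it is what holds in the application, where each $I_j$ is a function of data observed before time $j$ and independent randomization; you flag this correctly.

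The problem is your last line. With $\lambda=\epsilon/(v^2+c\epsilon)$ one has $1-\lambda c=v^2/(v^2+c\epsilon)$, so the exponent per summand is
\[
-\lambda\epsilon+\frac{v^2\lambda^2}{2(1-\lambda c)}=-\frac{\epsilon^2}{v^2+c\epsilon}+\frac{\epsilon^2}{2(v^2+c\epsilon)}=-\frac{\epsilon^2}{2(v^2+c\epsilon)},
\]
so your computation yields $\exp\bigl(-n\epsilon^2/(2(v^2+c\epsilon))\bigr)$, not the stated $\exp\bigl(-n\epsilon^2/(v^2+c\epsilon)\bigr)$, and no choice of $\lambda$ removes the factor $2$. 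Indeed the inequality as printed cannot hold in general: take $I_j\equiv 1$ and $\epsilon_j$ i.i.d.\ $N(0,v^2)$, which satisfies the moment condition for a suitable $c$; for small $\epsilon$ the claimed right-hand side decays like $e^{-n\epsilon^2/v^2}$, faster than the true Gaussian tail of order $e^{-n\epsilon^2/(2v^2)}$. The version with the factor $2$ is the correct one, and it is also the one the paper actually uses downstream: substituting $n\mapsto N(x_0)$ and $\epsilon\mapsto(\pi_n/2)(\epsilon-w(h;f))$ into $\exp\bigl(-n\epsilon^2/(2(v^2+c\epsilon))\bigr)$ produces exactly the denominator $8\bigl(v^2+c(\pi_n/2)(\epsilon-w(h;f))\bigr)$ appearing in Lemma~1. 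So the lemma statement evidently has a typographical omission of the $2$; your argument proves the corrected statement, but you should not assert that your exponent ``collapses into the Bernstein form claimed in the statement,'' because it does not.
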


The proof for this lemma can be found in \cite{yang2002randomized}.

\bibliography{mybibfile_rev}

\begin{thebibliography}{}

\bibitem[Agrawal and Goyal, 2012]{agrawal2012analysis}
Agrawal, S. and Goyal, N. (2012).
\newblock Analysis of thompson sampling for the multi-armed bandit problem.
\newblock In {\em Conference on Learning Theory (COLT)}.

\bibitem[Anderson, 1964]{anderson1964sequential}
Anderson, T. (1964).
\newblock Sequential analysis with delayed observations.
\newblock {\em Journal of the American Statistical Association},
  59(308):1006--1015.

\bibitem[Auer et~al., 2002]{auer2002finite}
Auer, P., Cesa-Bianchi, N., and Fischer, P. (2002).
\newblock Finite-time analysis of the multiarmed bandit problem.
\newblock {\em Machine Learning}, 47(2-3):235--256.

\bibitem[Berry and Fristedt, 1985]{berry1985bandit}
Berry, D.~A. and Fristedt, B. (1985).
\newblock Bandit problems: sequential allocation of experiments (monographs on
  statistics and applied probability).
\newblock {\em London: Chapman and Hall}, 5:71--87.

\bibitem[Bubeck and Cesa-Bianchi, 2012]{bubeck2012regret}
Bubeck, S. and Cesa-Bianchi, N. (2012).
\newblock Regret analysis of stochastic and nonstochastic multi-armed bandit
  problems.
\newblock {\em Foundations and Trends in Machine Learning}, 5(1):1--122.

\bibitem[Cesa-Bianchi et~al., 2016]{cesa2016delay}
Cesa-Bianchi, N., Gentile, C., Mansour, Y., and Minora, A. (2016).
\newblock Delay and cooperation in nonstochastic bandits.
\newblock {\em Journal of Machine Learning Research}, 49(1):613--650.

\bibitem[Cesa-Bianchi and Lugosi, 2006]{cesa2006prediction}
Cesa-Bianchi, N. and Lugosi, G. (2006).
\newblock {\em Prediction, learning, and games}.
\newblock Cambridge University Press.

\bibitem[Chapelle and Li, 2011]{chapelle2011empirical}
Chapelle, O. and Li, L. (2011).
\newblock An empirical evaluation of thompson sampling.
\newblock In {\em Advances in Neural Information Processing Systems}, pages
  2249--2257.

\bibitem[Dudik et~al., 2011]{Dudik:2011:EOL:3020548.3020569}
Dudik, M., Hsu, D., Kale, S., Karampatziakis, N., Langford, J., Reyzin, L., and
  Zhang, T. (2011).
\newblock Efficient optimal learning for contextual bandits.
\newblock In {\em Proceedings of the Twenty-Seventh Conference on Uncertainty
  in Artificial Intelligence}. AUAI Press.

\bibitem[Gittins, 1979]{gittins1979bandit}
Gittins, J.~C. (1979).
\newblock Bandit processes and dynamic allocation indices.
\newblock {\em Journal of the Royal Statistical Society: Series B
  (Methodological)}, 41(2):148--164.

\bibitem[Joulani et~al., 2013]{joulani2013online}
Joulani, P., Gyorgy, A., and Szepesv{\'a}ri, C. (2013).
\newblock Online learning under delayed feedback.
\newblock In {\em International Conference on Machine Learning}, pages
  1453--1461.

\bibitem[Lai and Robbins, 1985]{lai1985asymptotically}
Lai, T.~L. and Robbins, H. (1985).
\newblock Asymptotically efficient adaptive allocation rules.
\newblock {\em Advances in Applied Mathematics}, 6(1):4--22.

\bibitem[Langford and Zhang, 2008]{langford2008epoch}
Langford, J. and Zhang, T. (2008).
\newblock The epoch-greedy algorithm for multi-armed bandits with side
  information.
\newblock In {\em Advances in Neural Information Processing Systems}, pages
  817--824.

\bibitem[Lattimore and Szepesv{\'a}ri, 2018]{lattimore2018bandit}
Lattimore, T. and Szepesv{\'a}ri, C. (2018).
\newblock Bandit algorithms.
\newblock {\em Cambridge University Press}.

\bibitem[Li et~al., 2010]{li2010contextual}
Li, L., Chu, W., Langford, J., and Schapire, R.~E. (2010).
\newblock A contextual-bandit approach to personalized news article
  recommendation.
\newblock In {\em Proceedings of the 19th International Conference on World
  Wide Web}, pages 661--670. ACM.

\bibitem[Mandel et~al., 2015]{mandel2015queue}
Mandel, T., Liu, Y.-E., Brunskill, E., and Popovi{\'c}, Z. (2015).
\newblock The queue method: Handling delay, heuristics, prior data, and
  evaluation in bandits.
\newblock In {\em Twenty-Ninth AAAI Conference on Artificial Intelligence}.

\bibitem[Perchet and Rigollet, 2013]{perchet2013multi}
Perchet, V. and Rigollet, P. (2013).
\newblock The multi-armed bandit problem with covariates.
\newblock {\em The Annals of Statistics}, 41(2):693--721.

\bibitem[Pike-Burke et~al., 2018]{pike2018bandits}
Pike-Burke, C., Agrawal, S., Szepesv{\'a}ri, C., and Grunewalder, S. (2018).
\newblock Bandits with delayed, aggregated anonymous feedback.
\newblock In {\em International Conference on Machine Learning}.

\bibitem[Qian and Yang, 2016a]{qian2016kernel}
Qian, W. and Yang, Y. (2016a).
\newblock Kernel estimation and model combination in a bandit problem with
  covariates.
\newblock {\em Journal of Machine Learning Research}, (1):5181--5217.

\bibitem[Qian and Yang, 2016b]{qian2016randomized}
Qian, W. and Yang, Y. (2016b).
\newblock Randomized allocation with arm elimination in a bandit problem with
  covariates.
\newblock {\em Electronic Journal of Statistics}, 10(1):242--270.

\bibitem[Robbins, 1952]{robbins1952}
Robbins, H. (1952).
\newblock Some aspects of the sequential design of experiments.
\newblock {\em Bulletin of the American Mathematical Society}, 58(5):527--535.

\bibitem[Russo et~al., 2018]{russo2018tutorial}
Russo, D.~J., Van~Roy, B., Kazerouni, A., Osband, I., and Wen, Z. (2018).
\newblock A tutorial on thompson sampling.
\newblock {\em Foundations and Trends in Machine Learning}, 11(1):1--96.

\bibitem[Sarkar, 1991]{sarkar1991one}
Sarkar, J. (1991).
\newblock One-armed bandit problems with covariates.
\newblock {\em The Annals of Statistics}, 19(4):1978--2002.

\bibitem[Slivkins, 2014]{slivkins2014contextual}
Slivkins, A. (2014).
\newblock Contextual bandits with similarity information.
\newblock {\em Journal of Machine Learning Research}, 15(1):2533--2568.

\bibitem[Sutton and Barto, 2018]{sutton2018reinforcement}
Sutton, R.~S. and Barto, A.~G. (2018).
\newblock {\em Reinforcement learning: An introduction}.
\newblock MIT press.

\bibitem[Suzuki, 1966]{suzuki1966sequential}
Suzuki, Y. (1966).
\newblock On sequential decision problems with delayed observations.
\newblock {\em Annals of the Institute of Statistical Mathematics},
  18(1):229--267.

\bibitem[Vernade et~al., 2017]{vernade2017stochastic}
Vernade, C., Capp{\'e}, O., and Perchet, V. (2017).
\newblock Stochastic bandit models for delayed conversions.
\newblock In {\em Conference on Uncertainty in Artificial Intelligence}.

\bibitem[Woodroofe, 1979]{woodroofe1979one}
Woodroofe, M. (1979).
\newblock A one-armed bandit problem with a concomitant variable.
\newblock {\em Journal of the American Statistical Association},
  74(368):799--806.

\bibitem[Yang and Zhu, 2002]{yang2002randomized}
Yang, Y. and Zhu, D. (2002).
\newblock Randomized allocation with nonparametric estimation for a multi-armed
  bandit problem with covariates.
\newblock {\em The Annals of Statistics}, (1):100--121.

\end{thebibliography}

\end{document}